\documentclass[12pt]{article}
\usepackage{times}



\usepackage[numbers]{natbib}
\usepackage[utf8]{inputenc} 
\usepackage{amsmath}
\usepackage{amssymb}
\usepackage{amsthm}
\usepackage[T1]{fontenc}    
\usepackage{hyperref}       
\usepackage{url}            
\usepackage{booktabs}       
\usepackage{amsfonts}       
\usepackage{nicefrac}       
\usepackage{microtype}      
\usepackage{xcolor}         
\usepackage{fullpage} 
%
\usepackage{amsmath,amsthm,amssymb}
\usepackage{graphicx}
\usepackage{cleveref}
\usepackage{subcaption}
\usepackage{comment}
\usepackage{bbm}
\usepackage{algorithm}

\usepackage[noend]{algpseudocode}
\usepackage{textcomp}
\usepackage{wrapfig}
\usepackage{float}

\usepackage{pgfplots}
\usepackage{multirow}
\pgfplotsset{compat=1.8}
\tikzset{elegant/.style={smooth,thick,samples=500,magenta}}

\usepackage{enumitem}
\usepackage{comment} 
\usepackage{hyperref}
\usepackage{natbib}
\usepackage{bm}

\usepackage{thmtools}
\usepackage{thm-restate}
\usepackage{times}

\usepackage{amssymb}
\usepackage{amsmath}
\usepackage{amsfonts}
\usepackage{amsthm}

\usepackage[utf8]{inputenc} 
\usepackage[T1]{fontenc}    
\usepackage{url}            
\usepackage{booktabs}       
\usepackage{amsfonts}       
\usepackage{nicefrac}       
\usepackage{microtype}      
\usepackage{xcolor}

\newcommand{\edim}{\mathrm {dim_E}}

\newtheorem{theorem}{Theorem}[section]

\newtheorem{assumption}{Assumption}[section]

\newtheorem{lemma}[theorem]{Lemma}
\newtheorem{corollary}[theorem]{Corollary}

\newtheorem{definition}[theorem]{Definition}

\newtheorem{remark}{Remark}[section]







\def\R{\mathbb{R}}

\def\cA{\mathcal{A}}
\def\cB{\mathcal{B}}

\def\cF{\mathcal{F}}

\def\cH{\mathcal{H}}

\def\cS{\mathcal{S}}

\def\cS{\mathcal{S}}

\def\cX{\mathcal{X}}




\def\approxcorrect{\cmark\kern-1.4ex\raisebox{.30ex}{$\xmark$}}

\newcommand{\idxn}[1][]{\ifthenelse{\equal{#1}{}}{\mathbb{INDQ}_n}{\mathbb{INDQ}_{#1}}}



\newcommand{\beq}{\begin{equation}}

\newcommand{\eeq}{\end{equation}}

























\def \endprf{\hfill {\vrule height6pt width6pt depth0pt}\medskip}

\DeclareMathOperator*{\argmax}{arg\,max}






%

\title{A Short Note on the Relationship of Information Gain and Eluder Dimension}

\author{
	Kaixuan Huang\textsuperscript{1$\ast$}, Sham M. Kakade\textsuperscript{2,3$\ast$},\\ Jason D. Lee\textsuperscript{1$\ast$}, Qi Lei\textsuperscript{1}\thanks{Alphabetical order. Email: \url{kaixuanh@princeton.edu}, \url{sham@cs.washington.edu}, \url{jasonlee@princeton.edu}, \url{qilei@princeton.edu}.}\\
	\\
	\textsuperscript{1}Princeton University \quad \textsuperscript{2}University of Washington\\
	\textsuperscript{3}Microsoft Research
}

\ifdefined\usebigfont

\usepackage{times}
\usepackage[fontsize=13pt]{scrextend}
\AtBeginDocument{
	\newgeometry{left=1.56in,right=1.56in,top=1.71in,bottom=1.77in}
}
\pagenumbering{gobble}
\else
\fi

\begin{document}

\maketitle

\renewcommand{\citet}{\cite}

\begin{abstract}
Eluder dimension and information gain are two widely used methods of complexity measures in bandit and reinforcement learning. Eluder dimension was originally proposed as a general complexity measure of function classes, but the common examples of where it is known to be small are function spaces (vector spaces). In these cases, the primary tool to upper bound the eluder dimension is the elliptic potential lemma. Interestingly, the elliptic potential lemma also features prominently in the analysis of linear bandits/reinforcement learning and their nonparametric generalization, the information gain. We show that this is not a coincidence -- eluder dimension and information gain are equivalent in a precise sense for reproducing kernel Hilbert spaces.
\end{abstract}

\section{Introduction}

Eluder dimension is first proposed by \citet{russo2013eluder} to analyze the regret upper bounds of \textit{upper confidence bound} (UCB) algorithms and \textit{Thompson sampling} (TS) algorithms. 
This notion captures the worst-case sample complexity that is required to infer the values of unobserved points using the observed samples, which measures the degree of dependence among a function class.
The regret of UCB algorithms and TS algorithms can be upper bounded by $ \widetilde {\mathcal{O}} (\sqrt{ \edim \cdot \log N \cdot T })$, where $\edim$ and $N$ are the eluder dimension and the covering number of $\cF$, respectively.
Eluder dimensions are also used to establish regret bounds for learning contextual bandits and MDPs with function approximation (See reference therein \citep{osband2014model, du2020agnostic, wang2020reinforcement, foster2020instance}).

Prior literature further investigated some necessary conditions for a small or finite eluder dimension.
In the original work, \citet{russo2013eluder, osband2014model} provided upper bounds of the eluder dimension for (generalized) linear models with bounded norm and finite dimension and for function classes with bounded domain size.
Specifically, all these results have an explicit dependence on the dimension $d$ of the input space or size of the domain.
For other function classes, the understanding of the eluder dimension is limited, and thus bounded eluder dimension is considered as a strong assumption \citep{foster2020beyond}.
Recently, \citet{li2021eluder} analyzed the eluder dimension through the lens of \textit{generalized-rank}, and calculated the eluder dimension of several interesting function classes including ReLU networks.

In these cases, the primary tool to upper bound the eluder dimension is the elliptic potential lemma (e.g. \citep{russo2013eluder,dani2008stochastic,du2021bilinear}). Interestingly, the elliptic potential lemma also features prominently in the analysis of linear bandits/reinforcement learning and their nonparametric generalization, the \textit{information gain}, which guided another line of studies on the complexity measure for bandit or RL \citep{russo2016information,srinivas2009gaussian}:

Information gain is defined as the mutual information between the prior distribution and the noisy observations, which characterizes the reduction of the uncertainty of the underlying function after observing noisy measurements. 
It has an analytic formula when the prior and the noise distributions are Gaussian.
\citet{srinivas2009gaussian} analyzed the GP-UCB algorithm, a Bayesian variant of UCB algorithms for \textit{Gaussian Process} (GP) bandits, and provided a dimension-free regret bounds in terms of maximum information gain. 
Specifically, they established the regret bounds of $\widetilde {\mathcal{O}} (\sqrt{T} (B\sqrt{\gamma_T} + \gamma_T))$, where $\gamma_T$ is the maximum information gain for $T$ samples and $B$ is the upper bound of the RKHS norm of the ground-truth reward function.
Afterwards, several studies improved the dependency on $\gamma_T$ for the regret bounds for GP-UCB algorithms \citep{chowdhury2017kernelized} and extended to GP-TS algorithms \citep{russo2016information}.
\citet{du2021bilinear} provided regret bounds for learning bilinear MDPs in terms of \textit{critical information gain}.

To achieve sublinear regrets, one requires the maximum information gain to grow mildly in the number of samples. For finite-dimensional domains, the maximum information gain only grows logarithmically \citep{srinivas2009gaussian, du2021bilinear}. For infinite-dimensional RKHSs, the growth of the maximum information gain depends on the eigendecay of the reproducing kernels, and specific rates for different kernels can be established \citep{srinivas2009gaussian, vakili2021information}.

With the existing two lines of work, some natural questions arise:
\begin{center} 
{\em Is one complexity measure strictly tighter than the other? \\Or namely, what is the relationship between using the two complexity measures?}
\end{center} 

\paragraph{Contribution.} Therefore in this work, we study the relationship between eluder dimension and information gain, aiming at bridging the two series of work.
Our work shows that when the function class is a ball of an RKHS, the critial information gain and the eluder dimension of an action set are equivalent. \cite{jin2021bellman} showed concurrently one direction of the equivalence that low critical information gain implies low eluder dimension.

\section{Preliminaries} 

\paragraph{Notation.}
Throughout the paper, we will use the following notation:
\begin{itemize}
	\item $\cX$: the action set.
	\item $\cF \subseteq \R^\cX$: the function class.
	\item $\cH$: the \textit{reproducing kernel Hilbert space} (RKHS).\\Its formal definition is deferred in the next paragraph. 
	\item $K: \cX \times \cX \to \R$: the corresponding kernel function \footnote{We do not distinguish between $x \in \cX$ and $K(\cdot, x) \in \cH$. }.
	\item $\cB(S)$: the ball of radius $S$ in $\cH$, i.e., $$\cB(S) = \{f \mid f(x) = \langle \theta_f , x\rangle, \|\theta_f\|\leq S \}.$$ 
\end{itemize}
For an $x \in \cX$, we use $x^\top$ to represent the adjoint (transpose) of $K(\cdot, x) \in \cH$. Therefore, $x x^\top$ is a self-adjoint linear operator from $\cH$ to itself. We now formally introduce the key concepts used in this paper.

\paragraph{Reproducing Kernel Hilbert Space.} 
\begin{definition}[Reproducing Kernel Hilbert Space]
For any set $\cX$, $\cH \subseteq \R^\cX$ is said to be a \textit{reproducing kernel Hilbert space} (RKHS) with respect to the kernel function $K: \cX \times \cX \to \R$, if $\cH$ is a Hilbert space equipped with the inner product $\langle \cdot,\cdot \rangle$. Furthermore, for any $x \in \cX$ we have $K(\cdot,x) \in \cH$ and
\[
    f(x) = \langle f, K(\cdot,x) \rangle, \ \text{ for all } f \in \cH.
\] 
\end{definition}

Recent years have seen a tremendous studies on nonlinear reinforcement learning via eluder dimension and  kernel methods, due to their potential for expressive function approximation. We are interested in analyzing the connection between eluder dimension and information gain when the function class falls into some RKHS.  We then introduce the formal definition of eluder dimension and information gain respectively. 

\paragraph{Eluder Dimension.}

\begin{definition}[$\epsilon$-Dependence]
    An element $x\in \cX$ is $\epsilon$-dependent on a subset $\cS=\{x_1,\dots x_n\} \subseteq \cX$ with respect to the function class $\cF$, if any pair of functions $f,\tilde f\in \cF$ satisfying $\sum_{i=1}^n (f(x_i) -\tilde f(x_i))^2 \leq \epsilon^2$ also satisfies $f(x)-\tilde f(x)\leq \epsilon$. We say $x$ is $\epsilon$-independent of $\cS$ with respect to $\cF$, if $x$ is not $\epsilon$-dependent on $\cS$. 
\end{definition}
The definition of $\epsilon$-dependence generalizes linear dependence (which corresponds to $\epsilon=0$).
In particular, if $x\in \cX$ is $\epsilon$-dependent on the dataset $\cS$, then knowing the values of $f \in \cF$ on $\cS$ allows for a good estimate of $f(x)$ up to $\epsilon$-accuracy.

\begin{definition}[Eluder Dimension]
The $\epsilon$-eluder dimension $\edim(\cF, \cX, \epsilon)$ is the length $d$ of the longest sequence of elements in $\cX$ such that, for some $\epsilon'\geq \epsilon$, every element $x_i$ is $\epsilon'$-independent of its predecessors $(x_1,\dots,x_{i-1})$ with respect to to $\cF$.  
\end{definition}

Notice that the eluder dimension is defined in a sequential manner. The eluder dimension of a set $\cX$ with respect to a function class $\cF$ corresponds to the number of samples needed to adaptively infer information of $\cF$ among $\cX$ in the worst case.

\paragraph{Information Gain.}

\begin{definition}[Information Gain]
Assume that $\cX$ is associated with a kernel $K$ and a corresponding RKHS $\cH$. For any $\lambda>0$, the {information gain} after observing $x_1,\dots,x_T \in \cX$ is defined as 
\[
    \gamma(\lambda; x_1,\dots,x_T) =  \log\det\Big(I+\frac1\lambda \sum_{i=1}^T x_i x_i^\top\Big).
\]
\end{definition}

\begin{remark}
One can show that the information gain can be equivalently written as 
\[
    \gamma(\lambda; x_1,\dots,x_T) = \log\det \Big(I+\frac1\lambda K_T\Big),
\]
where $(K_T)_{i,j}= \langle x_i , x_j \rangle = K(x_i,x_j)$ is the Gram matrix.
\end{remark}

\begin{definition}[Maximum Information Gain]
For any $\lambda>0$, $T\geq1$, the {maximum information gain} for $T$ observations is defined as 
\[
    \gamma_T(\lambda; \cX) =  \max_{x_i \in \cX, 1\leq i \leq T} \gamma(\lambda; x_1,\dots,x_T). 
\]
\end{definition}

The maximum information gain is analogous to the log of the maximum volume of the ellipsoid generated by $T$ points in $\cX$, which captures the geometric structure of $\cX$.

\begin{definition}[Critical Information Gain]
For a fixed constant $c>0$, the {critical information gain} is defined as:
\[
    \tilde \gamma(\lambda, c; \cX) = \min \{ k \mid  \gamma_k(\lambda; \cX) \leq ck\} . 
\]
\end{definition}

The critical information gain is the minimal $T$ that $\gamma_T(\lambda; \cX)$ fails to grow linearly, which can be viewed as the effective dimension of $\cX$ \citep{du2021bilinear}.
\section{Main Results}
With the formal definition of information gain as well as eluder dimension, we now present our main result on connecting them. 
 
Suppose the function class $\cF$ is in the RKHS $\cH$ over the action set $\cX$. We prove that under some mild assumptions, the critical information gain and eluder dimension are equivalent up to constant factors. 
We drop the dependence on $\cX$ and use $\edim(\cF,\epsilon)$ and $\tilde \gamma(\lambda, c)$ to represent the eluder dimension and the critical information gain, respectively.

\begin{theorem}\label{thm:ub}
Assume that $\cF \subseteq \cB(S)$. Set $\lambda=\big(\frac{\epsilon}{2S}\big)^2$, we have
\[
\edim(\cF,\epsilon) <  \tilde \gamma\Big(\lambda, \log \frac{3}{2} \Big). 
\]

\end{theorem}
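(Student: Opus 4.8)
The plan is to feed a witnessing eluder sequence into the elliptic-potential / matrix-determinant telescoping machinery, showing that each element of the sequence forces a constant multiplicative growth of the regularized determinant. This makes the information gain grow at least linearly with slope $\log\frac32$ all along the sequence, which is exactly the threshold appearing in the critical information gain.

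Concretely, let $d=\edim(\cF,\epsilon)$ and let $x_1,\dots,x_d$ be a sequence witnessing it, with a common $\epsilon'\ge\epsilon$ for which each $x_i$ is $\epsilon'$-independent of its predecessors. I set $V_0=\lambda I$ and $V_i=V_{i-1}+x_i x_i^\top$, working inside the finite-dimensional span of the $x_j$'s so that all determinants and inverses are well defined. The first step is to translate $\epsilon'$-independence into linear-algebraic data: unwinding the definition, for each $i$ there is a witnessing pair $f,\tilde f\in\cF\subseteq\cB(S)$, and since both have RKHS norm at most $S$, the direction $g_i=\theta_f-\theta_{\tilde f}$ satisfies $\|g_i\|\le 2S$, $\sum_{j<i}\langle g_i,x_j\rangle^2\le\epsilon'^2$, and $\langle g_i,x_i\rangle>\epsilon'$.

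The crux — and the step I expect to be the main obstacle — is converting this into a lower bound on the quadratic form $x_i^\top V_{i-1}^{-1}x_i$, and this is precisely where the calibration $\lambda=(\epsilon/2S)^2$ earns its keep. With this choice the regularization contribution is controlled: $g_i^\top V_{i-1}g_i=\lambda\|g_i\|^2+\sum_{j<i}\langle g_i,x_j\rangle^2\le \lambda(2S)^2+\epsilon'^2=\epsilon^2+\epsilon'^2\le 2\epsilon'^2$. A generalized Cauchy–Schwarz inequality $\langle g_i,x_i\rangle^2\le (g_i^\top V_{i-1}g_i)(x_i^\top V_{i-1}^{-1}x_i)$, combined with $\langle g_i,x_i\rangle>\epsilon'$, then forces $x_i^\top V_{i-1}^{-1}x_i>\tfrac12$. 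The delicate point here is keeping track of the two sources of $\epsilon'^2$ (regularization versus observed values) and ensuring the $\lambda(2S)^2$ term is dominated by $\epsilon'^2$ through $\epsilon\le\epsilon'$; getting the constant exactly right is what produces the clean $\log\frac32$ threshold rather than some arbitrary constant.

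Finally I would assemble the step-wise gains using the matrix determinant lemma $\det V_i=\det V_{i-1}(1+x_i^\top V_{i-1}^{-1}x_i)$, which telescopes to $\log\det\!\big(I+\tfrac1\lambda\sum_{i\le k}x_i x_i^\top\big)=\sum_{i=1}^k\log(1+x_i^\top V_{i-1}^{-1}x_i)>k\log\frac32$. Since $\gamma_k(\lambda)$ is a maximum over all $k$-point configurations, this yields $\gamma_k(\lambda)>k\log\frac32$. The one remaining subtlety, needed to reach a statement about the \emph{critical} information gain rather than a single $\gamma_d$, is that any prefix $x_1,\dots,x_k$ of the witnessing sequence is itself a valid independent sequence for the \emph{same} $\epsilon'$; hence the bound $\gamma_k(\lambda)>k\log\frac32$ holds simultaneously for every $k\le d$. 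Consequently no $k\le d$ satisfies $\gamma_k(\lambda)\le k\log\frac32$, so $\tilde\gamma(\lambda,\log\frac32)=\min\{k:\gamma_k(\lambda)\le k\log\frac32\}>d=\edim(\cF,\epsilon)$, which is the claimed strict inequality.
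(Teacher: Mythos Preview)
Your proposal is correct and follows essentially the same route as the paper: you inline what the paper packages as Lemma~\ref{lem:conn} (the matrix-determinant identity $\Delta\gamma=\log(1+\|x\|_{V^{-1}}^2)$) and Lemma~\ref{lem:incre_via_indepenence} (the Cauchy--Schwarz step turning $\epsilon'$-independence plus $\cF\subseteq\cB(S)$ into $\|x\|_{V^{-1}}^2>\tfrac12$), then telescope over all prefixes exactly as in the paper's proof of Theorem~\ref{thm:ub}. The constants, the calibration $\lambda=(\epsilon/2S)^2$, and the prefix argument yielding $\gamma_k(\lambda)>k\log\tfrac32$ for every $k\le d$ all match the paper's treatment.
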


\begin{theorem}\label{thm:lb}
Assume that $\|x\|_2 \leq B$ for all $x \in \cX$ and $\cF \supseteq \cB(S)$. Then for any $c > \log 2$, $\epsilon < 2BS$, by setting $\lambda=\big(\frac{\epsilon}{2S}\big)^2$, we have
\[ 
    \edim(\cF,\epsilon) >  (\tilde \gamma(\lambda,c) -1 ) \cdot \frac{c -  \log 2 } {\log(1+\frac {B^2}{\lambda})  -\log 2}.
\]
\end{theorem}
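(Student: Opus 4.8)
The plan is to convert a long, high-information-gain sequence into an $\epsilon$-independent sequence of comparable length, thereby lower-bounding the eluder dimension. Write $V_j = \lambda I + \sum_{i=1}^j x_i x_i^\top$ for the regularized covariance operator (so $V_0 = \lambda I$), and recall the matrix determinant identity which, telescoped, gives the elliptic-potential decomposition $\gamma(\lambda; x_1,\dots,x_T) = \log\det\big(I + \tfrac1\lambda\sum_i x_i x_i^\top\big) = \sum_{i=1}^T \log\big(1 + \langle x_i, V_{i-1}^{-1} x_i\rangle\big)$. The heart of the argument is a dictionary between the per-step potential $\langle x_i, V_{i-1}^{-1} x_i\rangle$ and $\epsilon$-independence.

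First I would prove the key lemma: with $\lambda = (\epsilon/2S)^2$ and any finite $\cS = \{x_1,\dots,x_n\}$, if $\langle x, V_\cS^{-1} x\rangle > 1$ (where $V_\cS = \lambda I + \sum_{x_i\in\cS} x_i x_i^\top$) then $x$ is $\epsilon$-independent of $\cS$ with respect to $\cF$. Here the inclusion $\cF \supseteq \cB(S)$ is used: it suffices to exhibit a single witness difference $g=f-\tilde f$ with $f,\tilde f\in\cB(S)$. I would take the ellipsoid-extremal direction $\theta^\star = \epsilon\, V_\cS^{-1}x / \sqrt{\langle x, V_\cS^{-1}x\rangle}$, which satisfies $\langle\theta^\star, V_\cS\theta^\star\rangle = \epsilon^2$ and $\langle\theta^\star, x\rangle = \epsilon\sqrt{\langle x,V_\cS^{-1}x\rangle} > \epsilon$. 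Since $\langle\theta^\star,V_\cS\theta^\star\rangle = \lambda\|\theta^\star\|^2 + \sum_i\langle\theta^\star,x_i\rangle^2 = \epsilon^2$, the calibration $\lambda=(\epsilon/2S)^2$ forces $\|\theta^\star\|\le 2S$ and $\sum_i\langle\theta^\star,x_i\rangle^2\le\epsilon^2$; splitting $\theta^\star = (\theta^\star/2)-(-\theta^\star/2)$ gives admissible $f,\tilde f\in\cB(S)$ certifying $\epsilon$-independence. I would also record the elementary monotonicity fact that $\epsilon$-independence of a set implies $\epsilon$-independence of every subset (the same witness $\theta$ works, since dropping points only relaxes the constraint $\sum_i\langle\theta,x_i\rangle^2\le\epsilon^2$).

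With the lemma in hand, set $k^\star = \tilde\gamma(\lambda,c)$; by minimality in the definition of critical information gain, $\gamma_{k^\star-1}(\lambda) > c(k^\star-1)$, so there is a sequence $x_1,\dots,x_m$ with $m=k^\star-1$ and $\sum_{i=1}^m \log(1+\langle x_i,V_{i-1}^{-1}x_i\rangle) > cm$. Call index $i$ \emph{heavy} if $\log(1+\langle x_i, V_{i-1}^{-1}x_i\rangle) > \log 2$, equivalently $\langle x_i, V_{i-1}^{-1}x_i\rangle > 1$; by the lemma each heavy $x_i$ is $\epsilon$-independent of $\{x_1,\dots,x_{i-1}\}$, hence by monotonicity of the heavy indices preceding it, so the subsequence of heavy points is a valid $\epsilon$-independent sequence and $\edim(\cF,\epsilon)$ is at least its length $L$. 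To bound $L$ from below I split the potential sum: each term is at most $M := \log(1+B^2/\lambda)$ (since $V_{i-1}\succeq \lambda I$ and $\|x_i\|\le B$), the $L$ heavy terms contribute at most $M$ each, and the $m-L$ light terms contribute at most $\log 2$ each. Thus $cm < LM + (m-L)\log 2$, which rearranges to $L > m\,\frac{c-\log2}{M-\log2}$; substituting $m=\tilde\gamma(\lambda,c)-1$ and $M=\log(1+B^2/\lambda)$ yields the stated bound. The hypotheses $c>\log2$ and $\epsilon<2BS$ (the latter giving $\lambda<B^2$, i.e.\ $M>\log2$) are exactly what make both factors positive.

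The main obstacle is the key lemma, i.e.\ correctly matching the linear-algebraic quantity $\langle x, V_\cS^{-1}x\rangle$ to the definitional notion of $\epsilon$-independence. Two points require care: the passage from a general $\cF\supseteq\cB(S)$ to an explicit witness in $\cB(S)$ (legitimate because enlarging $\cF$ only makes independence easier, which is what a lower bound needs), and the precise calibration $\lambda=(\epsilon/2S)^2$ that simultaneously converts the covariance-ellipsoid constraint into the RKHS-norm budget $\|\theta^\star\|\le 2S$ and the observation budget $\sum_i\langle\theta^\star,x_i\rangle^2\le\epsilon^2$. Once this translation is established, the determinant decomposition and the heavy/light counting are routine.
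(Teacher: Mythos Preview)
Your proof is correct and follows the same overall strategy as the paper: establish the key lemma translating $\langle x, V_\cS^{-1}x\rangle > 1$ into $\epsilon$-independence (this is exactly the paper's Lemma~\ref{lem:independence_via_incre}, and your witness construction is the same), then do a heavy/light split of the elliptic-potential sum to count how many heavy indices a high-information-gain sequence must contain.

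The one genuine difference is how you turn the heavy indices into an admissible eluder sequence. The paper first invokes a submodularity lemma (Lemma~\ref{lem:submodularity}) to \emph{rearrange} the points so that the increments $\Delta_i$ are non-increasing, making the heavy indices a contiguous prefix $\{x_1,\dots,x_k\}$ which is then directly an $\epsilon$-independent sequence. You instead keep the original ordering and use the elementary monotonicity observation that $\epsilon$-independence of $x$ on a set $\cS$ implies $\epsilon$-independence on every subset of $\cS$; this lets you pass from ``$x_{i_j}$ is $\epsilon$-independent of its full prefix'' to ``$x_{i_j}$ is $\epsilon$-independent of its heavy predecessors'' without any rearrangement. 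Your route is slightly more elementary in that it dispenses with the greedy/submodularity argument entirely; the paper's route has the mild advantage that the heavy block is a prefix, so one can bound $\sum_{i\le k}\Delta_i$ by $\gamma_k(\lambda)$ rather than by the cruder per-term bound $M=\log(1+B^2/\lambda)$, but since the paper then uses $\gamma_k(\lambda)\le kM$ anyway, the final inequality is identical.
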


Together we have that, when $\cF$ is the ball of radius $S$ in the RKHS $\cH$ over the action set $\cX$, its eluder dimension and critical information gain with proper $\lambda$ are equivalent (up to constant multiplicative  factors).

We defer the complete proof to the appendix and provide the proof sketch as follows.

\subsection{Proof sketchs}

First, we introduce the notion of \textit{increment of information gain} and show that it is closely related to $\epsilon$-dependence.

\begin{definition}[Increment of Information Gain]
For any $\lambda>0$, the increment of information gain of $x \in \cX$ on $\cS = \{x_1,\dots,x_n \} \subseteq \cX$ is defined as
\[
    \Delta \gamma(\lambda; x|\cS) = \gamma(\lambda; x,x_1,\dots,x_n) - \gamma(\lambda; x_1,\dots,x_n).
\]
\end{definition}

The next two lemmas characterize the relationships between large increment of information gain and $\epsilon$-independence. Intuitively, if $x$ is $\epsilon$-independent of $\cS$, then the increment of information gain of $x$ on $\cS$ will be large and vice versa.

\begin{lemma}\label{lem:incre_via_indepenence}
Assume that $\cF \subseteq \cB(S)$. If $x$ is $\epsilon'$-independent of $\cS$ with respect to $\cF$ for some $\epsilon' \geq \epsilon$, then
\[
    \Delta \gamma((\epsilon/2S)^2; x|\cS)  > \log \frac{3}{2}.
\]
\end{lemma}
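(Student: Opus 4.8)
The plan is to translate the statement ``$x$ is $\epsilon'$-independent of $\cS$'' into a quantitative lower bound on the operator $xx^\top$ relative to the regularized covariance $\Sigma_{\cS} = \lambda I + \sum_{i=1}^n x_i x_i^\top$, and then to relate that bound to the increment of information gain via the determinant identity
\[
    \Delta\gamma(\lambda; x|\cS) = \log\det\Big(I + \tfrac1\lambda \sum_{i} x_i x_i^\top + \tfrac1\lambda x x^\top\Big) - \log\det\Big(I + \tfrac1\lambda \sum_i x_i x_i^\top\Big) = \log\bigl(1 + \tfrac1\lambda\, x^\top \Sigma_{\cS}^{-1} x\bigr),
\]
using the matrix determinant lemma $\det(A + uu^\top) = \det(A)(1 + u^\top A^{-1} u)$. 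This reduces the whole claim to showing that $\epsilon'$-independence forces the weighted norm $x^\top \Sigma_{\cS}^{-1} x$ to be at least $\lambda/2$, since with $\lambda = (\epsilon/2S)^2$ this gives $\Delta\gamma > \log(1 + 1/2) = \log\tfrac32$.

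The key step is thus a variational argument connecting $\epsilon'$-independence to the quadratic form $x^\top \Sigma_{\cS}^{-1} x$. Negating the definition of $\epsilon'$-dependence: because $x$ is $\epsilon'$-\emph{independent}, there exist $f, \tilde f \in \cF$ with $\sum_{i=1}^n (f(x_i) - \tilde f(x_i))^2 \le (\epsilon')^2$ but $f(x) - \tilde f(x) > \epsilon'$. Writing $g = \theta_f - \theta_{\tilde f} \in \cH$, this says $\sum_i \langle g, x_i\rangle^2 \le (\epsilon')^2$ while $\langle g, x\rangle > \epsilon'$; moreover $\|g\| \le \|\theta_f\| + \|\theta_{\tilde f}\| \le 2S$ since $\cF \subseteq \cB(S)$. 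First I would observe that $\langle g, \Sigma_{\cS} g\rangle = \lambda\|g\|^2 + \sum_i \langle g, x_i\rangle^2 \le \lambda (2S)^2 + (\epsilon')^2$. Then by Cauchy--Schwarz in the inner product induced by $\Sigma_{\cS}^{-1}$,
\[
    \langle g, x\rangle^2 \le \langle g, \Sigma_{\cS}\, g\rangle \cdot \langle x, \Sigma_{\cS}^{-1} x\rangle,
\]
so that $x^\top \Sigma_{\cS}^{-1} x \ge (\epsilon')^2 / \bigl(4\lambda S^2 + (\epsilon')^2\bigr)$. Plugging $\lambda = (\epsilon/2S)^2$ turns $4\lambda S^2$ into $\epsilon^2 \le (\epsilon')^2$, giving $x^\top \Sigma_{\cS}^{-1} x \ge (\epsilon')^2 / \bigl(\epsilon^2 + (\epsilon')^2\bigr) \ge \tfrac12$. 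Therefore $\tfrac1\lambda x^\top \Sigma_{\cS}^{-1} x \ge \tfrac1{2\lambda}$, which is far more than the $\tfrac12$ needed; in fact one should be careful about the normalization, and I expect the intended bound is $x^\top\Sigma_{\cS}^{-1}x \ge \lambda/2$ after accounting for the $1/\lambda$ factor inside the determinant, yielding exactly $\Delta\gamma > \log\tfrac32$.

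The main obstacle I anticipate is bookkeeping the factors of $\lambda$ correctly, since the information gain is defined with $I + \tfrac1\lambda \sum x_i x_i^\top$ rather than $\lambda I + \sum x_i x_i^\top$; I would either rescale at the outset so that $\Sigma_{\cS} = I + \tfrac1\lambda \sum_i x_i x_i^\top$ and carry the extra $\tfrac1\lambda$ into the Cauchy--Schwarz step, or absorb it by noting that $\Delta\gamma = \log(1 + \tfrac1\lambda x^\top(\lambda I + \sum_i x_i x_i^\top)^{-1}\cdot\lambda\, x)$ with the cleaner covariance. The second subtlety is justifying the matrix determinant lemma and the Cauchy--Schwarz inequality in the infinite-dimensional RKHS setting, where $\Sigma_{\cS}$ is a self-adjoint operator on $\cH$; this is harmless because everything of interest lives in the finite-dimensional span of $\{x, x_1, \dots, x_n\}$, on whose complement $\Sigma_{\cS}$ acts as $\lambda I$ and contributes nothing to the increment, so I would reduce to that finite-dimensional subspace at the start. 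With these two points handled, the chain of inequalities is routine and the threshold $\log\tfrac32$ emerges directly.
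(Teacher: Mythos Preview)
Your approach is exactly the paper's: bound $\|\theta_f-\theta_{\tilde f}\|_V^2$ with $V=\lambda I+\sum_i x_ix_i^\top$, apply Cauchy--Schwarz in the $V$-inner product to get $\|x\|_{V^{-1}}^2>(\epsilon')^2/\bigl(\epsilon^2+(\epsilon')^2\bigr)\ge 1/2$, and conclude via the determinant identity. The only slip is the stray $1/\lambda$ in your displayed identity: with $\Sigma_{\cS}=\lambda I+\sum_i x_ix_i^\top$ one has $\Delta\gamma=\log\bigl(1+x^\top\Sigma_{\cS}^{-1}x\bigr)$, not $\log\bigl(1+\tfrac1\lambda\, x^\top\Sigma_{\cS}^{-1}x\bigr)$ --- your second proposed fix is the correct one, and then the bound $x^\top\Sigma_{\cS}^{-1}x>1/2$ that you actually established yields $\Delta\gamma>\log\tfrac32$ on the nose, with no need to aim for $\lambda/2$.
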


\begin{lemma}\label{lem:independence_via_incre}
Assume that $\cF \supseteq \cB(S)$. If
\[
    \Delta \gamma((\epsilon/2S)^2; x|\cS)  > \log 2.
\]
then $x$ is $\epsilon$-independent of $\cS$ with respect to $\cF$.

\end{lemma}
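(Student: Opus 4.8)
The plan is to first convert the increment of information gain into an explicit quadratic form, and then use the hypothesis $\cF \supseteq \cB(S)$ to build an explicit pair of functions witnessing $\epsilon$-independence. Write $\lambda = (\epsilon/2S)^2$, let $\Phi = \sum_{i=1}^n x_i x_i^\top$ be the (finite-rank) empirical operator of $\cS$, and set $V = \lambda I + \Phi$. Since $I + \tfrac1\lambda\Phi = \tfrac1\lambda V$, the matrix determinant lemma applied to the rank-one update by $\tfrac1\lambda x x^\top$ gives
\[
\Delta\gamma(\lambda; x|\cS) = \log\det\!\Big(I + \tfrac1\lambda(\Phi + xx^\top)\Big) - \log\det\!\Big(I+\tfrac1\lambda\Phi\Big) = \log\big(1 + x^\top V^{-1} x\big).
\]
All the operators involved differ from the identity only on the finite-dimensional span of $\{x, x_1,\dots,x_n\}$, so these determinants and the inverse $V^{-1}$ are well defined despite $\cH$ being infinite dimensional. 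Consequently the hypothesis $\Delta\gamma(\lambda; x|\cS) > \log 2$ is exactly the statement $x^\top V^{-1} x > 1$.

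Next I would reduce $\epsilon$-independence to a single vector construction. Because $\cF \supseteq \cB(S)$, it suffices to exhibit $g \in \cH$ with $\|g\| \le 2S$, $\sum_{i=1}^n \langle g, x_i\rangle^2 \le \epsilon^2$, and $\langle g, x\rangle > \epsilon$; taking $\theta_f = g/2$ and $\theta_{\tilde f} = -g/2$ then yields $f,\tilde f \in \cB(S) \subseteq \cF$ with $f - \tilde f = \langle g, \cdot\rangle$, so that $\sum_i (f(x_i)-\tilde f(x_i))^2 = \langle g, \Phi g\rangle \le \epsilon^2$ while $f(x) - \tilde f(x) = \langle g, x\rangle > \epsilon$, certifying that $x$ is $\epsilon$-independent of $\cS$.

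The construction is $g = \tfrac{\epsilon}{\sqrt{x^\top V^{-1}x}}\, V^{-1} x$, i.e.\ the maximizer direction $V^{-1}x$ normalized so that $\langle g, Vg\rangle = \epsilon^2$. The crux is that this one normalization controls all three requirements at once: writing $w = x^\top V^{-1}x$, a direct computation gives $\langle g, x\rangle = \epsilon\sqrt{w} > \epsilon$ since $w > 1$; and splitting $\langle g, Vg\rangle = \lambda\|g\|^2 + \langle g, \Phi g\rangle = \epsilon^2$ into its two nonnegative pieces immediately yields $\langle g, \Phi g\rangle \le \epsilon^2$ together with $\lambda\|g\|^2 \le \epsilon^2$, the latter being $\|g\|^2 \le \epsilon^2/\lambda = 4S^2$ by the choice of $\lambda$. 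I expect the only real subtlety to be spotting this normalization --- choosing $g$ to lie on the ellipsoid $\langle g, Vg\rangle = \epsilon^2$ rather than fixing its norm or its $\Phi$-seminorm --- after which every inequality is forced; the infinite-dimensional bookkeeping (finite-rank reductions and the existence of $V^{-1}$ on the relevant subspace) is routine.
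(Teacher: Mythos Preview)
Your proof is correct and follows essentially the same approach as the paper: both convert the hypothesis to $x^\top V^{-1}x > 1$ and then construct the witnessing pair $\theta_1 = -\theta_2 = \tfrac{\epsilon}{2}\,\tfrac{V^{-1}x}{\sqrt{x^\top V^{-1}x}}$ (your $g/2$). Your packaging via the single identity $\langle g, Vg\rangle = \epsilon^2$ is slightly cleaner than the paper's separate verifications using $V^{-1}\preceq \lambda^{-1}I$ and $\sum_i x_i x_i^\top \preceq V$, but the argument is the same.
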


By Lemma~\ref{lem:incre_via_indepenence}, a sequence whose elements are $\epsilon$-independent of their predecessors implies a sequence with large increments of information gain. Therefore we can lower bound the critical information gain by eluder dimension. Next we provide the proof of Theorem~\ref{thm:ub}. 
\begin{proof}[proof of Theorem~\ref{thm:ub}]
Assume that $\edim(\cF,\epsilon) = d$ and  $\{x_1,\dots, x_d\}$ is the longest sequence such that there exists an $\epsilon' \geq \epsilon$, for each $i$, $x_i$ is $\epsilon'$-independent of $\{x_1,\dots, x_{i-1}\}$. Then by Lemma~\ref{lem:incre_via_indepenence}, we have for $\lambda=(\epsilon/2S)^2$,
\[
    \Delta \gamma(\lambda; x_i|\{x_1,\dots, x_{i-1}\} )  >  \log \frac{3}{2}.
\]
This means for $k=1,\dots, d$,
\[
    \gamma_k(\lambda;\cX) \geq \gamma(\lambda;x_1,\dots,x_k) > k \log \frac{3}{2}.
\]
Therefore $\edim(\cF,\epsilon) <  \tilde \gamma(\lambda, \log \frac{3}{2};\cX)$,
which is $\min \{k \mid k \log \frac{3}{2} \geq \gamma_k(\lambda;  \cX) \} $.
\end{proof}

Next lemma shows that the increment of information gain can be arranged to be monotonically decreasing, which stems from the submodularity of the log-determinant function.

\begin{lemma}\label{lem:submodularity}
For any $\{ x_1,\dots,x_T \}$ and any $\lambda>0$, we can arrange the order of the $T$ points, such that for $i=1,\dots,T-1$,
\[
    \Delta \gamma(\lambda, x_i|x_{1},\dots,x_{i-1}) \geq \Delta \gamma(\lambda, x_{i+1}|x_{1},\dots,x_{i}).
\]
\end{lemma}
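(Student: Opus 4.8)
The plan is to prove the lemma by first deriving a closed form for the increment of information gain, then establishing the submodularity (``diminishing returns'') of the log-determinant, and finally producing the desired ordering greedily.

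First I would introduce, for a finite subset $\cS \subseteq \cX$, the self-adjoint positive operator $M_\cS = I + \frac1\lambda \sum_{x_i \in \cS} x_i x_i^\top$ on $\cH$. Since $M_\cS$ is an identity-plus-finite-rank perturbation, its determinant is well defined (and, reducing to the span of the active directions, agrees with the Gram-matrix form $\det(I+\frac1\lambda K)$ of the Remark). The matrix determinant lemma then gives, for any $x \in \cX$,
\[
\Delta \gamma(\lambda; x \mid \cS) = \log\det\big(M_\cS + \tfrac1\lambda x x^\top\big) - \log\det(M_\cS) = \log\Big(1 + \tfrac1\lambda\, x^\top M_\cS^{-1} x \Big).
\]

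Next I would establish submodularity: if $\cS \subseteq \cT$, then $M_\cT - M_\cS = \frac1\lambda \sum_{x_i \in \cT \setminus \cS} x_i x_i^\top \succeq 0$, so $M_\cS \preceq M_\cT$. Because the operator inverse is operator-monotone decreasing on positive operators, $M_\cT^{-1} \preceq M_\cS^{-1}$, hence $x^\top M_\cT^{-1} x \leq x^\top M_\cS^{-1} x$ for every $x$, and by monotonicity of $t \mapsto \log(1+t/\lambda)$ applied to the closed form above,
\[
\Delta \gamma(\lambda; x \mid \cT) \leq \Delta \gamma(\lambda; x \mid \cS) \qquad \text{whenever } \cS \subseteq \cT.
\]
For the ordering, I would proceed greedily: having chosen $x_1,\dots,x_{i-1}$ with $\cS_{i-1} = \{x_1,\dots,x_{i-1}\}$, let $x_i$ be a remaining point maximizing $\Delta\gamma(\lambda; \cdot \mid \cS_{i-1})$. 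Then I chain the two facts just proved. By the greedy choice at step $i$, since $x_{i+1}$ is among the remaining points,
\[
\Delta \gamma(\lambda; x_{i+1} \mid \cS_{i-1}) \leq \Delta \gamma(\lambda; x_i \mid \cS_{i-1}),
\]
and by submodularity applied to $\cS_{i-1} \subseteq \cS_i$,
\[
\Delta \gamma(\lambda; x_{i+1} \mid \cS_i) \leq \Delta \gamma(\lambda; x_{i+1} \mid \cS_{i-1}).
\]
Combining these yields $\Delta\gamma(\lambda; x_{i+1} \mid \cS_i) \leq \Delta\gamma(\lambda; x_i \mid \cS_{i-1})$, which is exactly the claimed monotonicity.

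The step I expect to require the most care is the submodularity argument in the infinite-dimensional RKHS: one must justify that $M_\cS$, its determinant, and its inverse are well-defined objects and that the matrix determinant lemma together with operator-monotonicity of inversion transfer verbatim. This is handled by restricting attention to the finite-dimensional span of the finitely many active directions, where everything reduces to ordinary matrix computations matching the Gram-matrix formulation. Once submodularity and the closed form are in place, the remainder is the two-line chaining of inequalities above.
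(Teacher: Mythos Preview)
Your proposal is correct and follows essentially the same route as the paper: derive the closed form $\Delta\gamma(\lambda;x\mid\cS)=\log(1+\|x\|_{V^{-1}}^2)$ via the matrix determinant lemma, exploit the operator monotonicity $\cS\subseteq\cT\Rightarrow V_\cS\preceq V_\cT\Rightarrow V_\cT^{-1}\preceq V_\cS^{-1}$ to get diminishing returns, and then chain this with a greedy maximizer ordering. The only cosmetic difference is that the paper works with $V=\lambda I+\sum x_ix_i^\top$ rather than your $M_\cS=I+\tfrac1\lambda\sum x_ix_i^\top$, which are scalar multiples of one another.
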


By Lemma~\ref{lem:submodularity}, the first few elements of the maximizers of $\gamma_T(\lambda;\cX)$ will have large increments of information gain, which, by Lemma~\ref{lem:independence_via_incre}, means that each element is $\epsilon$-independent of its predecessors. Therefore we can lower bound the eluder dimension by critical information gain. Next we provide the proof of Theorem~\ref{thm:lb}.

\begin{proof}[proof of Theorem~\ref{thm:lb}]

Let $d = \tilde \gamma(\lambda,c) -1$. Then $ \gamma_d(\lambda) > cd$. Then we can find $\{ x_1,\dots,x_d \} \subseteq \cX$ such that $\gamma(\lambda;x_1,\dots,x_d) > cd$.
Denote $\Delta_i = \Delta \gamma(\lambda;x_i|x_1,\dots,x_{i-1})$. 
By Lemma~\ref{lem:submodularity}, we can arrange those $x_i$'s such that $\Delta_i$'s are monotonically non-increasing.
Let $k = \max \{i \mid   \Delta_i > \log 2 \}$, then by Lemma~\ref{lem:independence_via_incre}, we know that $\{x_1,\dots,x_k\}$ statisfies the definition in the eluder dimension, so $\edim(\cF,\epsilon) \geq k$.

Since $\|x_i\|\leq B $ for all $i$, we have
\[
\gamma_k(\lambda) \leq k\log(1+\frac {B^2}{\lambda}).
\]
Therefore we get
\begin{align*}
cd  &<\gamma(\lambda;x_1,\dots,x_d) \\
&= \sum_{i=1}^k \Delta_i + \sum_{i=k+1}^d \Delta_i  \\
     &\leq \gamma_k(\lambda) + (d-k) \log 2 \\
     &\leq k\log(1+\frac {B^2}{\lambda}) + (d-k) \log 2.
\end{align*}
By the assumption that $\epsilon<2BS$, we have $B^2/\lambda = (2BS)^2/\epsilon^2>1$, then we have
\[
    k  > d \cdot \frac{c -  \log 2 } {\log(1+\frac {B^2}{\lambda})  -\log 2}.
\]
This finishes the proof. 
\end{proof}
\section{Case Study: Comparison of Two Regret Bounds}

In this section, we briefly discuss two existing results on learning episodic MDPs under the completeness assumption with respect to an RKHS \citep{yang2020function, wang2020reinforcement}. These results base on different techniques and provide the regret guarantees in terms of maximum information gain and eluder dimension, respectively.
Using our result on the equivalence of the eluder dimension and the critical information gain, we characterize how the eluder dimension grows in $T$ when the function class is an RKHS-norm ball, and provide a comparison of the two regret bounds in this setting.

\paragraph{Setting.}
We consider learning an episodic MDP $(\cS, \cA, P, H, r, \mu)$ with function approximation. In RL with function approximation, the input domain of the function class $\cF$ is $\cX = \cS \times \cA$, containing all the state-action pairs $x=(s,a)$. We assume that there exists a kernel $K: \cX \times \cX \to \mathbb{R}$ and a corresponding RKHS $\cH$. Therefore, every $x=(s,a)$ can be viewed as an element in $\cH$. We assume that the function class $\cF = \cB(R_Q H)$ is the RKHS norm ball with radius $R_Q H$. We also assume that $\|x\| \leq 1$ for all $x=(s,a) \in \cS \times \cA$.
We make the following completeness assumption on the function class $\cF$ as in \citep{yang2020function, wang2020reinforcement}.
\begin{assumption}[Completeness]
\label{assump:bc}
For any $h \in [H]$ and any $Q$ function $Q:\cS \times \cA \to [0,H]$, we have $\mathbb{T}_h(Q) \in \cF$, where $\mathbb{T}_h$ is the Bellman operator at level $h$, defined as
\[
    \mathbb{T}_h(Q)(s,a) = r_h(s,a) + \sum_{s' \in \cS} P_h(s'|s,a) \max_{a'\in\cA}Q(s',a') \, \, \, \forall \ (s,a) \in \cS \times \cA.
\]
\end{assumption}

First we state the two results from \citep{yang2020function, wang2020reinforcement}. Specifically, Theorem~\ref{thm:yang} provides the regret bound for KOVI algorithm in terms of the maximum information gain, while Theorem~\ref{thm:wang} provides the regret bound for $\cF$-LSVI algorithm in terms of the eluder dimension.


\begin{theorem}[\citet{yang2020function}]\label{thm:yang}
Under Assumption~\ref{assump:bc}, for KOVI, after interacting with the environment for $T$ episodes ($E=TH$ steps), with probability $1-E^{-2}$,
\[
\operatorname{Regret}(T)=\widetilde{\mathcal{O}} \left (H^{2} \cdot \left[\gamma_T(\lambda;\cX)+\max _{h \in[H]} \sqrt{\gamma_T(\lambda;\cX) \cdot \log N_{\infty}^{(ucb)}\left(\epsilon^{*}, h, B_{T}\right)}\right] \cdot \sqrt{T} \right),
\]
where $\lambda = 1 + \frac{1}{T}$, $\epsilon^{*} = H/T$.
\end{theorem}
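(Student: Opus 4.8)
The plan is to re-derive this KOVI regret bound by following the standard optimistic value-iteration template, specialized to the kernel setting: at each level $h$ the algorithm fits a kernel ridge estimate of the Bellman backup $\mathbb{T}_h Q_{h+1}$ and adds an exploration bonus. Throughout, write $\Lambda_h$ for the regularized empirical covariance operator $\lambda I + \sum_{\tau} x_{\tau,h} x_{\tau,h}^\top$ assembled from the state-action features observed at level $h$, and take the bonus to be $b_h(x) = \beta\sqrt{x^\top \Lambda_h^{-1} x}$ for an appropriate confidence radius $\beta$.

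First I would establish a \emph{self-normalized concentration} bound: on a good event $\cE$, for every level $h$ and every reachable value function, the kernel ridge estimate of $\mathbb{T}_h Q_{h+1}$ lies within $b_h$ of the truth in the $\Lambda_h^{-1}$ geometry. This is where the completeness Assumption~\ref{assump:bc} enters, since it guarantees the regression target $\mathbb{T}_h Q_{h+1}$ itself lies in $\cF = \cB(R_Q H)$, so the ridge bias is controlled by the RKHS norm. Because the plugged-in $Q_{h+1}$ is data-dependent, the martingale noise is correlated with the regressors, so I would take a union bound over an $\epsilon^*$-cover of the class $B_T$ of reachable value functions; this is precisely what produces the $\log N_{\infty}^{(ucb)}(\epsilon^*,h,B_T)$ term and forces $\beta = \widetilde{\mathcal{O}}\big(H\sqrt{\gamma_T(\lambda;\cX) + \log N_{\infty}^{(ucb)}}\big)$.

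Second, conditioned on $\cE$, I would prove \emph{optimism} by backward induction on $h$: the bonus dominates the regression error, giving $Q_h \geq Q^\star_h$ pointwise and hence $V_1 \geq V^\star_1$, so the per-episode regret is bounded by $V_1^t(s_1^t) - V_1^{\pi^t}(s_1^t)$. I would then unroll the Bellman error along the executed trajectory into a telescoping sum of bonuses $\sum_h b_h(x_{t,h})$ plus a martingale-difference sequence; the latter is handled by Azuma--Hoeffding and contributes only a lower-order $\widetilde{\mathcal{O}}(H^2\sqrt{T})$ term.

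Finally I would bound the accumulated bonuses. Summing over the $T$ episodes, Cauchy--Schwarz gives $\sum_t \sqrt{x_{t,h}^\top \Lambda_h^{-1} x_{t,h}} \leq \sqrt{T \sum_t x_{t,h}^\top \Lambda_h^{-1} x_{t,h}}$, and the kernelized elliptic potential lemma identifies $\sum_t x_{t,h}^\top \Lambda_h^{-1} x_{t,h}$ with a constant multiple of $\log\det(I + \lambda^{-1}\sum_\tau x_{\tau,h}x_{\tau,h}^\top) = \gamma_T(\lambda;\cX)$. Multiplying by $\beta$ and summing the $H$ levels yields the claimed $H^2\big[\gamma_T(\lambda;\cX) + \max_h\sqrt{\gamma_T(\lambda;\cX)\cdot \log N_{\infty}^{(ucb)}}\big]\sqrt{T}$. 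I expect the main obstacle to be the first step: making the self-normalized and uniform-concentration argument rigorous in an infinite-dimensional RKHS, where one must work through the feature map, control the covering number of the data-dependent value class, and keep $\beta$ from inflating beyond $\sqrt{\gamma_T + \log N_{\infty}^{(ucb)}}$.
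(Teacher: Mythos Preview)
The paper does not prove this theorem at all: Theorem~\ref{thm:yang} is simply quoted from \cite{yang2020function} as one of two existing regret bounds (the other being Theorem~\ref{thm:wang} from \cite{wang2020reinforcement}) that the authors then compare in the RKHS setting using their eluder-dimension/information-gain equivalence. So there is no ``paper's own proof'' to compare your proposal against.

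That said, your sketch is a faithful outline of the analysis in the original KOVI paper: kernel ridge regression at each level, self-normalized concentration with a union bound over a cover of the data-dependent value class (producing the $\log N_\infty^{(ucb)}$ term and the confidence radius $\beta$), optimism by backward induction, a telescoping regret decomposition along trajectories, and the kernel elliptic-potential lemma to convert the accumulated bonuses into $\gamma_T(\lambda;\cX)$. If your goal were to reproduce the result of \cite{yang2020function}, this is the right template; but for the purposes of the present paper the theorem is taken as a black box.
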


\begin{theorem}[\citet{wang2020reinforcement}]\label{thm:wang}
Under Assumption~\ref{assump:bc}, after interacting with the environment for $E = TH$ steps, with probability $1-\delta$, $\cF$-LSVI achieves a a regret bound of
\[
    \operatorname{Regret}(T)\leq \sqrt{\iota H^2 E },
\]
where
\[
    \iota \leq C \cdot \log ^{2}(E / \delta) \cdot \edim^{2}\left(\mathcal{F}, \delta / E^{3}\right) \cdot \log \left({N}_\infty\left(\mathcal{F}, \delta / E^{2}\right) / \delta\right) \cdot \log ({N}_\infty(\mathcal{S} \times \mathcal{A}, \delta / E) \cdot E / \delta),
\]
for some constant $C>0$.
\end{theorem}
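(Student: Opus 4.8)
The plan is to reconstruct the standard optimistic least-squares value iteration (LSVI) analysis, specialized to a general function class $\cF$ and driven by an eluder-dimension potential argument. At each episode and each level $h\in[H]$, $\cF$-LSVI fits a value estimate by solving a regularized least-squares problem over $\cF$ against the empirical Bellman targets $r_h(s,a)+\max_{a'} Q_{h+1}(s',a')$ formed from the data collected in earlier episodes. First I would build, at every level, a confidence set of functions in $\cF$ whose empirical squared error against the observed targets is within a radius $\beta$ of the least-squares solution, and inflate the fitted $Q$-function by the associated confidence width so that the estimates are optimistic with respect to the optimal value function.

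Two ingredients then drive the bound. For validity of the confidence sets, the completeness Assumption~\ref{assump:bc} guarantees that the Bellman backup $\mathbb{T}_h(Q_{h+1})$ lies in $\cF$, so the regression targets are realizable; I would control the regression error by a uniform-convergence (martingale) argument over an $\eps$-net of $\cF$, which is precisely where the covering factor $N_\infty(\cF,\delta/E^2)$ enters the radius $\beta$, together with the $N_\infty(\cS\times\cA,\delta/E)$ term needed to discretize the induced greedy value functions. For the second ingredient I would invoke the key width-to-eluder lemma of the Russo--Van Roy framework: whenever a sequence of queried points each has large confidence width, a constant fraction of them must be $\eps$-independent of their predecessors, so over $E=TH$ interactions the number of large-width steps is controlled by $\edim(\cF,\delta/E^3)$ up to logarithmic factors. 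This is the eluder analogue of the elliptic potential lemma.

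Putting the pieces together, I would decompose the per-episode regret into a telescoping sum of confidence widths along the trajectory, split each width at a threshold into a large and a small part, bound the count of large parts by the eluder dimension, and bound the small parts trivially. Summing over $h\in[H]$ and over the $T$ episodes, and applying a Cauchy--Schwarz step that turns the sum of squared widths into the eluder potential bound, converts the large-width count and the confidence radius into the claimed $\sqrt{\iota H^2 E}$ form, with $\iota$ absorbing the $\log^2(E/\delta)$, the two covering numbers, and the $\edim^2$ factor. The square on the eluder dimension arises because it enters twice: once in bounding the number of large-width rounds and once in calibrating the confidence radius $\beta$ through the same potential/sensitivity argument.

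The hardest part will be establishing validity of the confidence sets \emph{uniformly} over adaptively collected data. Since each episode's exploratory policy depends on all previously fitted functions, the regression targets are not independent; the uniform-convergence bound must therefore hold simultaneously over a net of $\cF$ and of the induced greedy value functions, which is what forces the accuracy scale $\delta/E^3$ and propagates the extra factor of $\edim$. Matching failure probabilities through the union bound over all $E$ steps and all net points---so that the parameters $\delta/E^3$, $\delta/E^2$, and $\delta/E$ emerge exactly as stated---is the delicate bookkeeping that fixes the precise form of $\iota$.
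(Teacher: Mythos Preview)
The paper does not contain its own proof of this statement: Theorem~\ref{thm:wang} is quoted verbatim from \citet{wang2020reinforcement} and used as a black box in the case study of Section~4, so there is no in-paper argument to compare against. Your proposal is a faithful high-level reconstruction of the original $\cF$-LSVI analysis in \citet{wang2020reinforcement}---optimistic confidence sets built from least-squares regression under completeness, uniform concentration via covers of $\cF$ and of the induced greedy value functions, and the Russo--Van~Roy width-to-eluder pigeonhole lemma in place of the elliptic potential lemma---so as a sketch it is on target.

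One caution on your explanation of the $\edim^2$ factor: in \citet{wang2020reinforcement} the second copy of the eluder dimension does not enter through the confidence radius $\beta$ itself (which is governed by the covering-number union bound), but through a separate ``stable-rank''/sensitivity subsampling argument that controls how many historical datapoints can be influential at a given scale; this is what forces the $\delta/E^3$ accuracy in $\edim(\cF,\delta/E^3)$. If you intend to actually write out the proof rather than cite it, that sensitivity step is the place where a naive reconstruction most often goes wrong, so be precise there rather than folding it into~$\beta$.
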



By our Theorems~\ref{thm:ub} and~\ref{thm:lb}, the regret bound in Theorem~\ref{thm:wang} can be equivalently stated in terms of the critical information gain. In particular, if we set $\delta = E^{-2}$ and notice that $\cF = \cB(R_Q H)$, then we have
\[
    \widetilde {\mathbf {\Omega}} \Big( \tilde \gamma ((4T^{10}H^{12} R_Q^2)^{-1}  , c) \Big ) \leq \edim (\cF, \delta/E^3) \leq \tilde \gamma \Big( (4T^{10}H^{12} R_Q^2)^{-1}  ,\log\frac{3}{2} \Big),
\]
where $\widetilde {\mathbf {\Omega}}$ hides the constant factors and the log factors. Therefore we have the following corollary.
\begin{corollary}[\citet{wang2020reinforcement}]\label{cor:wang}
Under Assumption~\ref{assump:bc}, after interacting with the environment for $E = TH$ steps, with probability $1-E^{-2}$, $\cF$-LSVI achieves a a regret bound of
\[
    \operatorname{Regret}(T)\leq \sqrt{\iota H^2 E },
\]
where
\[
    \iota \leq C \cdot \log ^{2}(E^3) \cdot \tilde \gamma^2 \Big( (4T^{10}H^{12} R_Q^2)^{-1}  ,\log\frac{3}{2} \Big) \cdot \log \left({N}_\infty\left(\mathcal{F}, 1/ E^{4}\right) E^2\right) \cdot \log ({N}_\infty(\mathcal{S} \times \mathcal{A}, 1 / E^3) \cdot E^3),
\]
for some constant $C>0$.
\end{corollary}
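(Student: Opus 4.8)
The plan is to derive Corollary~\ref{cor:wang} as an immediate consequence of Theorem~\ref{thm:wang}, by substituting $\delta = E^{-2}$ throughout and then replacing the eluder-dimension factor with its critical-information-gain upper bound from Theorem~\ref{thm:ub}. Because the corollary is an upper bound on the regret (through the quantity $\iota$), only the upper-bound direction, Theorem~\ref{thm:ub}, is needed; the matching lower bound of Theorem~\ref{thm:lb} is what justifies calling the two quantities equivalent but plays no role in proving this particular statement.

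First I would pin down the parameters of the setting. Since $\cF = \cB(R_Q H)$, the hypothesis $\cF \subseteq \cB(S)$ of Theorem~\ref{thm:ub} holds with equality for $S = R_Q H$. The eluder-dimension accuracy appearing in Theorem~\ref{thm:wang} is $\epsilon = \delta/E^3$, which under $\delta = E^{-2}$ becomes $\epsilon = E^{-5}$. Feeding these into the prescription $\lambda = (\epsilon/2S)^2$ of Theorem~\ref{thm:ub} and using $E = TH$ gives
\[
    \lambda = \Big(\frac{E^{-5}}{2 R_Q H}\Big)^2 = \frac{E^{-10}}{4 R_Q^2 H^2} = \frac{1}{4 (TH)^{10} R_Q^2 H^2} = \frac{1}{4 T^{10} H^{12} R_Q^2},
\]
which is exactly the argument $\lambda = (4 T^{10} H^{12} R_Q^2)^{-1}$ appearing in the corollary. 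Theorem~\ref{thm:ub} then yields $\edim(\cF, \delta/E^3) < \tilde\gamma(\lambda, \log\frac{3}{2})$, and squaring replaces the $\edim^2$ factor in the bound on $\iota$ by $\tilde\gamma^2(\lambda, \log\frac{3}{2})$.

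The remaining work is pure bookkeeping: substitute $\delta = E^{-2}$ into each surviving factor of Wang's bound and simplify. Concretely, $\log^2(E/\delta) = \log^2(E^3)$; the first covering number becomes $N_\infty(\cF, \delta/E^2) = N_\infty(\cF, 1/E^4)$ with $\log(N_\infty(\cF,\delta/E^2)/\delta) = \log(N_\infty(\cF,1/E^4)\cdot E^2)$; and the second becomes $N_\infty(\cS\times\cA, \delta/E) = N_\infty(\cS\times\cA, 1/E^3)$ with $\log(N_\infty(\cS\times\cA,\delta/E)\cdot E/\delta) = \log(N_\infty(\cS\times\cA,1/E^3)\cdot E^3)$. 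Assembling these matches the stated bound on $\iota$ term for term, completing the argument.

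I expect no genuine obstacle here: the content of the corollary is entirely contained in Theorem~\ref{thm:ub} together with Wang's theorem, and the only labor is the arithmetic verifying $\lambda = (4 T^{10} H^{12} R_Q^2)^{-1}$ and the four $\delta$-substitutions. The one point requiring mild care is keeping the two distinct covering-number scales straight ($1/E^4$ for $\cF$ versus $1/E^3$ for $\cS\times\cA$) and not conflating them; I would also note that the hypothesis $\|x\| \le 1$ of the setting, while relevant to the lower-bound direction, is not actually needed for the upper bound invoked here.
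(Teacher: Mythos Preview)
Your proposal is correct and matches the paper's own derivation essentially line for line: the paper also sets $\delta = E^{-2}$, computes $\lambda = (\epsilon/2S)^2$ with $S = R_Q H$ and $\epsilon = \delta/E^3 = E^{-5}$ to obtain $\lambda = (4T^{10}H^{12}R_Q^2)^{-1}$, invokes Theorem~\ref{thm:ub} to replace $\edim(\cF,\delta/E^3)$ by $\tilde\gamma(\lambda,\log\tfrac{3}{2})$, and then substitutes $\delta = E^{-2}$ into the remaining factors of Theorem~\ref{thm:wang}. Your observation that only Theorem~\ref{thm:ub} is actually used for the corollary (Theorem~\ref{thm:lb} being needed only for the claimed equivalence, not the upper bound) is also accurate.
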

Next we focus on the dependency of these regret bounds on $T$, ignoring all the log factors as well as all the other problem-specific constants, such as $H$ and $R_Q$. Both Corollary~\ref{cor:wang} and Theorem~\ref{thm:yang} have an explicit $\sqrt{T}$ factor and a term characterizing how the information gain grows. 
To continue our comparison, we make the additional assumption on the eigenvalues of the kernel operator as in \citep{srinivas2009gaussian, yang2020function}.
\begin{assumption}[Polynomial decay]\label{ass:decay}
Assume that the state-action set $\cX = \cS \times \cA$ is a compact subset of $\mathbb{R}^d$. Furthermore, assume that the kernel function $K$ admits the following orthonormal decomposition
\[
    K(x,x') = \sum_{i=1}^\infty \lambda_i \phi_i(x) \phi_i(x'), 
\]
with $\| \phi_i(x) \|_\infty \leq C_\phi$ and the eigenvalues $\lambda_i$'s of $L_K$ satisfying $\beta$-polynomial decay: $\lambda_i \leq C_p i^{-\beta}$, for some $\beta > 2+1/d$.
\end{assumption}

Under Assumption~\ref{ass:decay}, the regret bound of Theorem~\ref{thm:yang} can be reduced to the following when $d\geq \Omega(1), \beta \geq \Omega(1)$; see \citep{yang2020function} for more details.
\begin{equation}
     \operatorname{Regret}(T)\leq \widetilde {\mathcal{O}} (T^{\frac{d+1}{d+\beta} + 1/2}). \label{eq:yang}
\end{equation}

Next we calculate the dependency on $T$ of Corollary~\ref{cor:wang}. We introduce the following lemma from the appendix of \citep{yang2020function} (Lemmas D.2 and D.6).

\begin{lemma}[\citep{yang2020function}] \label{lem:b}
Under Assumption~\ref{ass:decay}, the maximum information gain can be bounded as
\[
    \gamma_\lambda(T) \leq  \widetilde {\mathcal{O}}( \lambda^{-1/\beta} T^{ (d+1)/(\beta + d)} ).
\]
Therefore, the critical information gain satisfies
\[
    \tilde \gamma(\lambda, c) = \widetilde{\mathcal{O}} ((1/\lambda)^{
    (\beta+d)/[\beta(\beta-1)]}).
\]
Furthermore, the $\log$ of the $L^\infty$ covering number of the RKHS ball of radius $S$ can be bounded as
\[
    \log {N}_\infty \left( \mathcal{B}(S), 1/\epsilon \right) \leq \widetilde {\mathcal{O}}(  (S/\epsilon)^{ 2/(\beta -1 )} ).
\]
\end{lemma}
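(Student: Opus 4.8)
The plan is to prove the three displayed claims in turn, each of which is a standard entropy/information-gain calculation for an RKHS with polynomially decaying spectrum; only the first requires real work, while the second follows by elementary algebra and the third by a truncation argument.

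For the maximum information gain bound I would work in the Mercer coordinates induced by Assumption~\ref{ass:decay}: writing the RKHS orthonormal basis as $\psi_i=\sqrt{\lambda_i}\phi_i$, each action $x$ has coordinates $v_i(x)=\sqrt{\lambda_i}\phi_i(x)$, so that $A:=\sum_{t=1}^T x_t x_t^\top$ has diagonal entries $A_{ii}=\sum_t \lambda_i\phi_i(x_t)^2\le T C_\phi^2\lambda_i$. The key tool is the Schur-complement inequality $\log\det(I+B)\le\log\det(I+B_{11})+\mathrm{tr}(B_{22})$, valid for any PSD operator $B$ split into blocks $B_{11},B_{22}$ by an orthogonal projection; applying it with $B=\tfrac1\lambda A$ and the projection onto the top $D$ eigendirections gives
\[
\gamma_T(\lambda)\le D\log\Big(1+\frac{C_2 T}{D\lambda}\Big)+\frac{C_3 T}{\lambda}\sum_{i>D}\lambda_i,
\]
where the constants $C_2,C_3$ absorb $C_\phi,C_p,\beta$, the first term uses concavity of $\log\det$ (AM--GM over the $D$ head eigenvalues) and the second uses $\log(1+s)\le s$. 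The $\beta$-polynomial decay gives $\sum_{i>D}\lambda_i\le\tfrac{C_p}{\beta-1}D^{1-\beta}$, and choosing $D\asymp(T/\lambda)^{1/\beta}$ balances the two terms to yield $\gamma_T(\lambda)\le\widetilde{\mathcal{O}}(\lambda^{-1/\beta}T^{1/\beta})$. Since $1/\beta\le(d+1)/(\beta+d)$ for every $d\ge1$ and $T\ge1$, this is at most $\widetilde{\mathcal{O}}(\lambda^{-1/\beta}T^{(d+1)/(\beta+d)})$, the stated form; alternatively, one can reproduce the original discretization argument of \citep{srinivas2009gaussian,yang2020function}, which produces the $d$-dependent exponent directly.

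The critical information gain then follows by pure algebra: plugging $\gamma_k(\lambda)\le C\lambda^{-1/\beta}k^{(d+1)/(\beta+d)}$ into the definition $\tilde\gamma(\lambda,c)=\min\{k:\gamma_k(\lambda)\le ck\}$ and solving $C\lambda^{-1/\beta}k^{(d+1)/(\beta+d)}\le ck$ gives $k^{(\beta-1)/(\beta+d)}\ge(C/c)\lambda^{-1/\beta}$, hence $\tilde\gamma(\lambda,c)=\widetilde{\mathcal{O}}((1/\lambda)^{(\beta+d)/[\beta(\beta-1)]})$. For the $L^\infty$ covering number I would truncate: any $f\in\mathcal{B}(S)$ expands as $f=\sum_i a_i\psi_i$ with $\sum_i a_i^2\le S^2$, and Cauchy--Schwarz together with $\|\phi_i\|_\infty\le C_\phi$ controls the tail as $\|f-f_{\le D}\|_\infty\le C_\phi S(\sum_{i>D}\lambda_i)^{1/2}\lesssim S D^{-(\beta-1)/2}$. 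To reach accuracy $\eta$ one needs $D\asymp(S/\eta)^{2/(\beta-1)}$ coordinates, and then an $\eta$-net of the $D$-dimensional Euclidean coefficient ball of radius $S$ has log-cardinality $\widetilde{\mathcal{O}}(D)=\widetilde{\mathcal{O}}((S/\eta)^{2/(\beta-1)})$, giving the claim.

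The main obstacle is the first part: making the head/tail spectral split rigorous and extracting exactly the $\lambda^{-1/\beta}$ dependence. The delicate points are justifying the log-determinant majorization inequality for the operator $A$ acting on the infinite-dimensional $\cH$ (it is finite-rank, so $\det$ is well defined, but the projection must be handled carefully) and the choice of $D$, which couples $T$ and $\lambda$ and must be optimized while tracking that the residual logarithmic factors only enter the $\widetilde{\mathcal{O}}$. Reconciling the clean spectral exponent $T^{1/\beta}$ with the looser stated exponent $T^{(d+1)/(\beta+d)}$ is then a one-line monotonicity observation, and the covering-number and critical-gain parts are routine once the eigenvalue tail sums are in hand.
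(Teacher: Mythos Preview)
The paper does not give its own proof of this lemma: it is imported verbatim from the appendix of \citet{yang2020function} (their Lemmas~D.2 and~D.6), so there is nothing in the present paper to compare your argument against line by line. That said, your spectral head/tail split is exactly the standard mechanism behind those results (going back to \citet{srinivas2009gaussian}), and each step you outline is correct: the block inequality $\log\det(I+B)\le\log\det(I+B_{11})+\tr(B_{22})$ follows from Schur complements and $\log(1+s)\le s$; the diagonal bound $A_{ii}\le TC_\phi^2\lambda_i$ uses the uniform eigenfunction bound in Assumption~\ref{ass:decay}; and the choice $D\asymp(T/\lambda)^{1/\beta}$ balances the two pieces. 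Your derivation actually yields the sharper exponent $T^{1/\beta}$ rather than $T^{(d+1)/(\beta+d)}$---the latter arises in \citet{srinivas2009gaussian,yang2020function} from an additional domain-discretization step in $\R^d$, which you sidestep by working purely spectrally---and your monotonicity remark $1/\beta\le(d+1)/(\beta+d)$ correctly recovers the stated (looser) form. The critical-information-gain algebra and the truncation-plus-net covering argument are both routine and correct as written; for the latter just make explicit that an $\ell_2$ perturbation $\delta$ of the first $D$ coefficients moves $f$ in $L^\infty$ by at most $C_\phi(\sum_{i\le D}\lambda_i)^{1/2}\|\delta\|_2$, so an $\ell_2$-net on the coefficient ball suffices.
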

By Lemma~\ref{lem:b}, the regret bound in Corollary~\ref{cor:wang} \citep{wang2020reinforcement}  is  
\begin{equation}
    \operatorname{Regret}(T)\leq  \widetilde{\mathcal{O}} (T^{\frac{10
    (\beta+d)+ 7\beta}{\beta(\beta-1)} +1/2}). \label{eq:wang}
\end{equation}
By comparing Equation~\eqref{eq:yang} and Equation~\eqref{eq:wang}, we conclude that when the kernel $K$ satisfies polynomial decay (Assumption~\ref{ass:decay}), Equation~\eqref{eq:yang} is better than Equation~\eqref{eq:wang} as long as $\beta \gg d$.

 \section{Conclusion}

In this note, we clarified the relationship between two complexity measures used in bandits and reinforcement learning, the eluder dimension and information gain in RKHS function spaces. Eluder dimension is a general complexity measure defined for any function class; however, information gain as defined in \cite{srinivas2009gaussian} is only defined for RKHS. It is straightforward to extend the information gain in the Bayesian setting when there is a prior over the function class/prior. We conjecture that for function classes that have a natural vector space structure (i.e. convex subset with non-empty interior of a vector space), that the eluder dimension does not yield sublinear regret, unless information gain is sublinear in the sample size $n$. To our knowledge, all commonly encountered function classes with vector space structure do not have bounded eluder dimension, unless they are subsets of an RKHS. We leave to future work to identify complexity measures for sequential decision making that are more structure adaptive, such as sparsity, low-rank, or other measures of low-dimensionality, than eluder dimension and information gain which essentially only capture the extrinsic vector space dimensionality.

\bibliography{ref,simonduref}
\bibliographystyle{alpha}

\newpage
\appendix
\section{Missing proofs of the lemmas}

\paragraph{Additional notation.}

For any positive definite matrix $A$, we define $\|x\|_A^2 = x^\top A x$. 

We first introduce the next lemma.

\begin{lemma}\label{lem:conn}
Assume that $x \in \cX$ and $\cS = \{x_1,\dots, x_n\} \subseteq \cX$. For any $\lambda >0$, denote $V = \lambda I +  \sum_{i=1}^n x_i x_i^\top $, then we have the following holds
\[
    \Delta \gamma(\lambda; x|\cS) = \log (1 + \| x \|_{V^{-1}} ^2).
\]
\end{lemma}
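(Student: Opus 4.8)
The plan is to write the increment $\Delta\gamma(\lambda; x\mid\cS)$ as a difference of two log-determinants, express both through the operator $V=\lambda I+\sum_{i=1}^n x_ix_i^\top$, and then recognize the surviving ratio as the determinant of a rank-one update, which the matrix determinant lemma evaluates in closed form.

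First I would rewrite the two information gains in terms of $V$. Since $I+\frac1\lambda\sum_{i=1}^n x_ix_i^\top=\frac1\lambda V$ and $I+\frac1\lambda\big(xx^\top+\sum_{i=1}^n x_ix_i^\top\big)=\frac1\lambda(V+xx^\top)$, the definition of the increment gives
\[
\Delta\gamma(\lambda;x\mid\cS)=\log\det\!\Big(\tfrac1\lambda(V+xx^\top)\Big)-\log\det\!\Big(\tfrac1\lambda V\Big).
\]
The next step is the key factorization $\frac1\lambda(V+xx^\top)=\big(\frac1\lambda V\big)\big(I+V^{-1}xx^\top\big)$, which is valid because $V$ is invertible (its spectrum is bounded below by $\lambda>0$). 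Applying multiplicativity of the determinant and cancelling the common factor $\det(\frac1\lambda V)$ leaves
\[
\Delta\gamma(\lambda;x\mid\cS)=\log\det\!\big(I+V^{-1}xx^\top\big).
\]
Finally, $V^{-1}xx^\top$ is a rank-one operator whose only nonzero eigenvalue is $x^\top V^{-1}x$, so by the matrix determinant lemma $\det(I+V^{-1}xx^\top)=1+x^\top V^{-1}x=1+\|x\|_{V^{-1}}^2$, which is the claim.

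The one point demanding care -- and the main obstacle -- is that these determinants live on the infinite-dimensional $\cH$, where $\frac1\lambda V$ is not literally $\lambda^{-1}$ times a determinant of $V$. I would resolve this by restricting all operators to the finite-dimensional subspace $W=\mathrm{span}\{x,x_1,\dots,x_n\}$: every operator appearing above is the identity plus a finite-rank term supported on $W$, hence acts as the identity on $W^\perp$ and contributes nothing there, so each Fredholm determinant equals the ordinary $\dim W\times\dim W$ determinant of the restriction to $W$. Because $x\in W$ already, adjoining $x$ does not enlarge $W$, so both determinants are taken in the same dimension and the $\lambda$-powers cancel exactly; the matrix determinant lemma and the multiplicativity used above are then the standard finite-dimensional identities. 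It remains only to note that $V$ preserves $W$ and equals $\lambda I$ on $W^\perp$, so $V^{-1}$ restricted to $W$ coincides with the inverse of $V|_W$ and $x^\top V^{-1}x=\|x\|_{V^{-1}}^2$ is unambiguous.
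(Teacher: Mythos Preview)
Your argument is correct and follows essentially the same route as the paper: write the increment as a difference of two $\log\det$'s, factor $V+xx^\top=V(I+V^{-1}xx^\top)$, and apply the matrix determinant lemma to obtain $1+\|x\|_{V^{-1}}^2$. The only difference is that you explicitly justify the infinite-dimensional determinants by restricting to the finite-dimensional span $W=\mathrm{span}\{x,x_1,\dots,x_n\}$, whereas the paper proceeds formally (relying implicitly on the Gram-matrix reformulation noted after the definition of information gain); your extra paragraph makes that step rigorous but does not change the underlying idea.
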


\begin{proof}
By definition we have
\[
\Delta \gamma(\lambda; x|\cS) = \gamma(\lambda; x,x_1,\dots,x_n) - \gamma(\lambda; x_1,\dots,x_n)= \log \det \Big(\frac{1}{\lambda} (V+xx^\top)\Big) - \log \det \Big(\frac{1}{\lambda} V \Big).  
\]
By the Matrix determinant lemma, we have
\begin{align*}
 \det \Big(\frac{1}{\lambda} (V+xx^\top)\Big)  &= \det \Big[\frac{1}{\lambda}V \cdot  (I + V ^{-1} x x^\top)\Big] \\
  &=  \det \Big(\frac{1}{\lambda} V \Big) \cdot \det (I + V ^{-1} x x^\top)] \\
  &=  \det \Big(\frac{1}{\lambda} V \Big) \cdot \det (I + x^\top V ^{-1} x )] \\
  &=  \det \Big(\frac{1}{\lambda} V \Big) \cdot (1 + \| x \|_{V^{-1}} ^2).
\end{align*}
\end{proof}

\paragraph{Proofs of the Lemmas}

\begin{proof}[proof of Lemma~\ref{lem:incre_via_indepenence}]
Let $\cS = \{x_1,\dots,x_n\}$, $\lambda=(\epsilon/2S)^2$, and $V = \lambda I +  \sum_{i=1}^n x_i x_i^\top$.
By definition, if $x$ is $\epsilon'$-independent of ${x_1,\dots, x_n}$ for some $\epsilon' \geq \epsilon$, then there exist $\theta_1 \in \cH$ and $\theta_2 \in \cH$ such that $f_i(x) := \langle \theta_i, x\rangle \in \cF$ statisfies
\begin{equation}
     \langle \theta_1, x \rangle - \langle \theta_2, x \rangle > \epsilon',\label{eq:cond1}
\end{equation}
and 
\begin{equation}
    \|\theta_1\| \leq S,\  \|\theta_2\| \leq S, \    \sum_{i=1}^n  ( \langle \theta_1, x_i \rangle - \langle \theta_2, x_i \rangle )^2 \leq (\epsilon')^2.\label{eq:cond2}
\end{equation}
Then we have
\begin{align*}
    \| \theta_1 - \theta_2 \|_{V}^2 =&  (\theta_1 - \theta_2)^\top \Big (\lambda I +  \sum_{i=1}^n x_i x_i^\top   \Big )  (\theta_1 - \theta_2) \\
  =& (\theta_1 - \theta_2)^\top \Big (\frac{\epsilon^2} { (2S)^2}  I +  \sum_{i=1}^n x_i x_i^\top  \Big  )  (\theta_1 - \theta_2) \\
  = & \frac{\epsilon^2} { (2S)^2} \| \theta_1 - \theta_2 \|^2   +   \sum_{i=1}^n  (\theta_1 - \theta_2) ^\top     x_i x_i^\top      (\theta_1 - \theta_2) \\
  \leq & \epsilon^2 + (\epsilon')^2,
\end{align*}
where the last inequality follows from Equation~\eqref{eq:cond2}.

By Equation~\eqref{eq:cond1}, we have
\begin{align*}
    \epsilon' <& \langle \theta_1 - \theta_2, x \rangle \\
    \leq &   \| \theta_1 - \theta_2 \|_{V} \cdot \|x\|_{V^{-1}}   \tag{Cauchy Inequality}\\
    \leq &   \sqrt{\epsilon^2 + (\epsilon')^2 } \cdot \|x\|_{V^{-1}}.
\end{align*}
Therefore,
\[
    \|x\|_{V^{-1}}^2 > \frac{(\epsilon')^2 }{\epsilon^2 + (\epsilon')^2} \geq \frac{1}{2}.
\]
Finally, by Lemma~\ref{lem:conn}, we have
\[
  \Delta \gamma((\epsilon/2S)^2 ; x|\cS) = \log (1 + \| x \|_{V^{-1}} ^2) > \log\frac{3}{2}.
\]
\end{proof}

\begin{proof}[proof of Lemma~\ref{lem:independence_via_incre}]
Let $\cS = \{x_1,\dots,x_n\}$, $\lambda=(\epsilon/2S)^2$, and $V = \lambda I +  \sum_{i=1}^n x_i x_i^\top$.
By Lemma~\ref{lem:conn}, we know that $\Delta \gamma((\epsilon/2S)^2 ; x|\cS) > \log 2 $ implies $\| x \|_{V^{-1}} > 1$.

Define
\[
    \theta_1 = \frac{\epsilon}{2} \cdot \frac{V^{-1} x} { \sqrt{x^\top V^{-1} x} }
\] 
and $\theta_2 = - \theta_1$. In this case, we have
\begin{align*}
    \|\theta_1\|^2 &= \frac{\epsilon^2}{4} \cdot\frac {(V^{-1/2}  x) ^\top  V^{-1} (V^{-1/2}  x ) }{{x^\top V^{-1} x}  }\\
    &= \frac{\epsilon^2}{4} \cdot\frac {(V^{-1/2}  x) ^\top \Big( \lambda I +  \sum_{i=1}^n x_i x_i^\top   \Big)^{-1} (V^{-1/2}  x ) }{{x^\top V^{-1} x}  }\\ 
    &\leq \frac{\epsilon^2}{4} \cdot\frac {(V^{-1/2}  x) ^\top \Big( \lambda I \Big)^{-1} (V^{-1/2}  x ) }{{x^\top V^{-1} x}  }\\ 
    &= \frac{\epsilon^2}{4} \cdot\frac{(2S)^2}{\epsilon^2} \cdot\frac { (V^{-1/2}  x) ^\top  (V^{-1/2}  x ) }{{x^\top V^{-1} x}  }\\ 
    &=S^2.
\end{align*}
Similarly we have $\|\theta_2\|^2 \leq S^2$.  Therefore $f_i(x) := \langle \theta_i, x\rangle \in \cF$ for $i=1,2$.

Next, we have
\begin{align*}
     \sum_{i=1}^n (\langle \theta_1 - \theta_2 , x_i \rangle ) ^2 =&   \epsilon^2 \cdot   \frac{  (V^{-1} x)^\top \Big (\sum_{i=1}^n x_i x_i^\top \Big )(V^{-1} x)} { {x^\top V^{-1} x} }   \\
      \leq &   \epsilon^2 \cdot   \frac{(V^{-1} x)^\top \cdot V \cdot (V^{-1} x)} { {x^\top V^{-1} x} }   \\
    =& \epsilon^2,
\end{align*}
and 
\[
    \langle \theta_1 - \theta_2 , x \rangle = \epsilon \| x \|_{V^{-1}} > \epsilon.
\]
Then we conclude that $x$ is $\epsilon$-independent of $\cS=\{x_1,\dots, x_n\}$.

\end{proof}

\begin{proof}[proof of Lemma~\ref{lem:submodularity}]
Denote $\Sigma_k = \sum_{i=1}^k x_i x_i^\top $.
The re-arrangement is done by sequentially choosing $x_i$ to be
\begin{align*}
    x_1 &= \argmax _{x_j: 1 \leq j \leq T} \log\det(I+\frac1\lambda (x_j x_j^\top) ),\\
    x_2 &= \argmax _{x_j: 2 \leq j \leq T} \log\det(I+\frac1\lambda (\Sigma_{1} + x_j x_j^\top) ),\\
        &\cdots \\
    x_i &= \argmax _{x_j: i\leq j \leq T} \log\det(I+\frac1\lambda (\Sigma_{i-1} + x_j x_j^\top) ), \\
    &\cdots 
\end{align*}
where ties are broken arbitrarily. By our procedure of choosing $x_i$, we have
\begin{equation}
\log\det(I+\frac1\lambda (\Sigma_{i-1} + x_i x_i^\top) ) \geq \log\det(I+\frac1\lambda (\Sigma_{i-1} + x_{i+1} x_{i+1}^\top) ).  \label{eq:greedy}
\end{equation}
Since $ \Sigma_i - \Sigma_{i-1}$  is positive semidefinite, we have
\begin{align}
 & \log\det(I+\frac1\lambda (\Sigma_{i-1} + x_{i+1} x_{i+1}^\top) )   - \log\det(I+\frac1\lambda \Sigma_{i-1} ) \notag \\
 =& \log (1 + \| x_{i+1} \|^2_ {(\lambda I +  \Sigma_{i-1} )^{-1}} ) \notag \\
  \geq& \log (1 + \| x_{i+1} \|^2 _ {(\lambda I +  \Sigma_{i} )^{-1}} ) \notag  \\
 =& \log\det(I+\frac1\lambda \Sigma_{i+1} ) - \log\det(I+\frac1\lambda \Sigma_{i} ) \label{eq:c3}.
\end{align}
Then by Equations~\eqref{eq:greedy} and~\eqref{eq:c3}, for $i=1,\dots,T-1$
\begin{align*}
    & \log\det(I+\frac1\lambda \Sigma_{i} ) - \log\det(I+\frac1\lambda \Sigma_{i-1} ) 
    \\
    =& \log\det(I+\frac1\lambda (\Sigma_{i-1} + x_i x_i^\top) ) - \log\det(I+\frac1\lambda \Sigma_{i-1} ) \\
    \geq&  \log\det(I+\frac1\lambda (\Sigma_{i-1} + x_{i+1} x_{i+1}^\top) ) - \log\det(I+\frac1\lambda \Sigma_{i-1} ) \\
    \geq& \log\det(I+\frac1\lambda \Sigma_{i+1} ) - \log\det(I+\frac1\lambda \Sigma_{i} ).
\end{align*}
This means
\[
    \Delta \gamma(\lambda, x_i|x_{1},\dots,x_{i-1}) \geq \Delta \gamma(\lambda, x_{i+1}|x_{1},\dots,x_{i}).
\]
\end{proof}

\end{document}